\let\svtikzpicture\tikzpicture
\def\tikzpicture{\noindent\svtikzpicture}
\theoremstyle{plain}
\newtheorem{thm}{Theorem}
\newtheorem{defn}[thm]{Definition}
\newtheorem{lemma}[thm]{Lemma}
\newtheorem{ex}[thm]{Example}
\newtheorem{hyp}[thm]{Hypothesis}
\def\N{\mathbb{N}}
\def\Z{\mathbb{Z}}
\def\2{{\mathbf{2}}}
\def\mod@estimate@lineht{%
  \ST@lineht=\arraystretch \baslineskp 
  \ST@stretchht\ST@lineht\advance\ST@stretchht-\baslineskp 
  \ifdim\ST@stretchht<\z@\ST@stretchht\z@\fi 
  \ST@trace\tw@{Average line height: \the\ST@lineht}%
  \ST@trace\tw@{Stretched line height: \the\ST@stretchht}%
} 
\newcommand\restr[2]{{
  \left.\kern-\nulldelimiterspace 
  #1 
  \vphantom{\big|} 
  \right|_{#2} 
  }}
\definecolor{myazure}{RGB}{76, 153, 0}
\title{Computational Hierarchy of Elementary Cellular Automata}
\author{Barbora Hudcová$^{1, 2}$ \and Tomáš Mikolov$^{2}$\\
\mbox{}\\
$^1$Charles University, Prague\\
$^2$Czech Institute of Informatics, Robotics and Cybernetics, CTU, Prague} 
\tikzstyle{vecArrow} = [thick, decoration={markings,mark=at position
\tikzstyle{innerWhite} = [semithick, white,line width=1.4pt, shorten >= 4.5pt]
\begin{document}
\maketitle

\begin{abstract}
The complexity of cellular automata is traditionally measured by their computational capacity. However, it is difficult to choose a challenging set of computational tasks suitable for the parallel nature of such systems. We study the ability of automata to emulate one another, and we use this notion to define such a set of naturally emerging tasks. We present the results for elementary cellular automata, although the core ideas can be extended to other computational systems. We compute a graph showing which elementary cellular automata can be emulated by which and show that certain chaotic automata are the only ones that cannot emulate any automata non-trivially. Finally, we use the emulation notion to suggest a novel definition of chaos that we believe is suitable for discrete computational systems. We believe our work can help design parallel computational systems that are Turing-complete and also computationally efficient.
\end{abstract}

\section{Introduction}
Discrete systems exhibit a wide variety of dynamical behavior ranging from ordered and easily predictable to a very complex, disordered one. In this paper, we study cellular automata (CA) that have intriguing visualizations of their space-time dynamics. Observing them helps us build intuition for distinguishing the different types of dynamics, as studied by \cite{newscience}. Informally, complex CA produce higher-order structures, whereas the space-time diagrams of chaotic CA are seemingly random. Though CA dynamics has been studied extensively (\cite{wolfram}, \cite{kurka}, \cite{wuensche_global}, \cite{gutovitz_hier}, \cite{zenil_compression}), it is still a very difficult problem to formally define the notions of complexity and chaos in CA.

Traditionally, complexity of CA is studied through their computational capacity (\cite{toffoli}, \cite{cook}), whereas chaos is studied via topological dynamics (\cite{devaney}). As the two approaches are not connected in any obvious way, it is an interesting open problem of whether chaotic CA can compute non-trivial tasks (\cite{chaos_comp}). 

In this paper, we study the complexity of CA through their computational capacity; namely, we study their ability to emulate one another; this notion was first introduced by \cite{mazoyer}. We present the \textit{emulation relation} between a pair of CA, and we demonstrate the results on a toy class of elementary CA; namely, we present their computational hierarchy. The results inspired us to define a new notion of chaos for discrete dynamical systems connected to their computational capacity.

\section{Introducing Cellular Automata}

A cellular automaton can be perceived as a $k$-dimensional grid consisting of identical finite state automata with the same local neighborhood. They are all updated synchronously in discrete time steps according to a fixed local update rule which is a function of the neighbors' states. A formal definition can be found in \cite{Kari_survey}.

\subsection{Basic Notions}
We say that $\Z$ is a \textit{one-dimensional cellular grid} and we call its elements the \textit{cells}. Let $S$ be a finite set. An $S$-configuration of the grid is a mapping $c: \Z \rightarrow S$, we write $c_i = c(i)$ for each $i$. We define the \textit{ nearest-neighbors relative neighborhood} of each cell $i \in \Z$ to be the triple $(i-1, i, i+1)$. In this paper, we will study 1-dimensional CA with nearest neighbors. Each such CA is characterized by a tuple $(S, f)$ where $S$ is a finite set of \textit{states} and $f: S^3 \rightarrow S$ is a \textit{local transition rule} of the CA. \textit{The global rule of the CA} $(S, f)$ \textit{operating on an infinite grid} is a mapping $F: S^\Z \rightarrow S^\Z$ defined as:
$$F(c)_i = f(c_{i-1}, c_i, c_{i+1}).$$

For practical purposes, when observing the CA simulations, we consider the grid to be of finite size with a periodic boundary condition. In such a case, we compute the cells in the relative neighborhood modulo the size of the grid.

\paragraph{Elementary CA}
\textit{Elementary cellular automata} (ECA) are 1D nearest-neighbors CA with states $S= \{0, 1 \}$. We identify each local rule $f$ determining an ECA with the \textit{Wolfram number} of $f$ defined as:
$$2^0 f(0, 0, 0) + 2^1 f(0, 0, 1) + 2^2 f(0, 1, 0) + \ldots + 2^7 f(1, 1, 1).$$
We will refer to each ECA as a ``rule $k$'' where $k$ is the corresponding Wolfram number of its underlying local rule. The class of ECA is a frequently used toy model for studying different CA properties due to its relatively small size; there are only 256 of them. 

For an ECA operating on a cyclic grid of size $n$ with global rule $F$ we define the \textit{trajectory of a configuration} $u \in \{0, 1 \}^n$ to be $(u, F(u), F^2(u), \ldots)$.
The space-time diagram of such a simulation is obtained by plotting the configurations as horizontal rows of black and white squares (corresponding to states 1 and 0) with time progressing downwards.

\subsection{CA Complexity via Computational Capacity}
Classically, the complexity of a CA is demonstrated by its computational capacity. Intuitively, we believe that CA capable of computing non-trivial tasks should be more complex than those that are not. In the past, many different computational problems were considered, such as the majority computation task (\cite{mitchellGA}) or the firing squad synchronization (\cite{firing_squad}), a detailed overview was written by \cite{mitchell_overview}. Nevertheless, the most classical task is the simulation of a computationally universal system (a Turing machine, a tag system, etc.). Over the years, many different CA were designed or showed to be Turing complete. However, it seems unnatural to demonstrate the complexity of an inherently parallel system by embedding a sequential computational model into it. In our opinion, an ideal set of benchmark tasks helping us determine the computational capacity of CA should
\begin{itemize}
\item consist of tasks suitable for the parallel computational environment
\item be challenging enough
\item for a given CA and a task $T$, it should be effectively verifiable whether the CA can compute $T$.
\end{itemize}
For a fixed class $\mathcal{C}$ of CA, a natural task is the following:
$$\textit{Given a CA in }  \mathcal{C}, \textit{ how many other CA in } \mathcal{C} \textit{ can it simulate?}$$
The key problem is finding a suitable definition of CA "simulating" one-another. Various approaches have been suggested --- simulations can be interpreted via CA coarse-grainings (\cite{israeli}), or through embedding a local rule to larger size cell-blocks (\cite{mazoyer}, \cite{ollinger_sim}). Many interesting theoretical results stem from such definitions. For instance, \cite{ollinger_sim} defined a CA simulation notion which admits a universal CA --- one that is able to simulate all other CA with the same dimensionality; it was designed by \cite{culik}. In this paper, we consider a notion very similar to the one suggested by \cite{mazoyer}. Compared to their definition, ours is stricter and, thus, slightly easier to verify. We call it \textit{CA emulation} and introduce it in the subsequent section.

\section{CA Emulations}
\subsection{Basic Notions}

\begin{defn}[Subautomaton]~\\
Let $\mathrm{ca}_1 = (S, f)$, $\mathrm{ca}_2 = (T, g)$ be two nearest-neighbor 1D CA. We say that $\mathrm{ca}_1$ is a subautomaton of $\mathrm{ca}_2$ if there exists a one-to-one encoding $e: S \rightarrow T$ such that
$$e(f(s_1, s_2, s_3)) = g(e(s_1), e(s_2), e(s_3))$$
for all $(s_1, s_2, s_3) \in S^3$. 
\end{defn}
This simply means that we can embed the rule table of $\mathrm{ca}_1$ into the rule table of $\mathrm{ca}_2$; the definition is equivalent to saying that $(S, f)$ is a subalgebra of $(T, g)$.

\begin{ex}
For an ECA $\mathrm{ca} = (\{0, 1 \}, f)$, we define the dual ECA $\mathrm{ca}' = (\{0, 1 \}, f')$ as $$f'(b_1, b_2, b_3) = 1 - f(1 - b_1, 1- b_2, 1- b_3).$$ $f'$ simply changes the role of 0 and 1 states. The encoding $e(b) = 1-b$, $b \in \{0, 1 \}$ witnesses that $\mathrm{ca}$ is a subautomaton of $\mathrm{ca}'$ and vice versa. 
\begin{figure}[h!]
\centering
    \begin{tikzpicture}[thick, every node/.style={inner sep=0,outer sep=0}]
  \node at (0, 0) {
     \includegraphics[width=0.3\linewidth]{./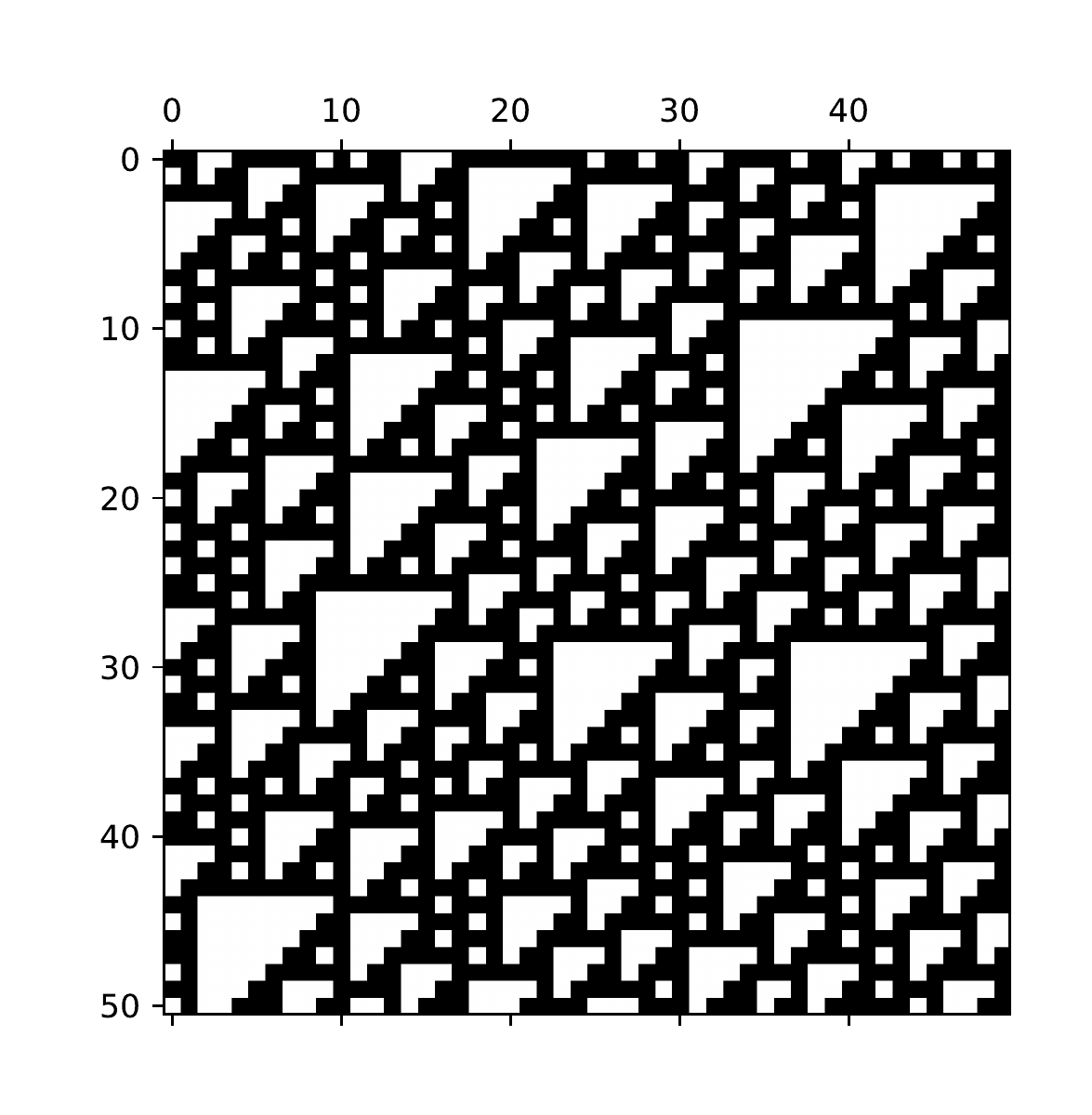}
  };
  \node at (2.5, 0) {
     \includegraphics[width=0.3\linewidth]{./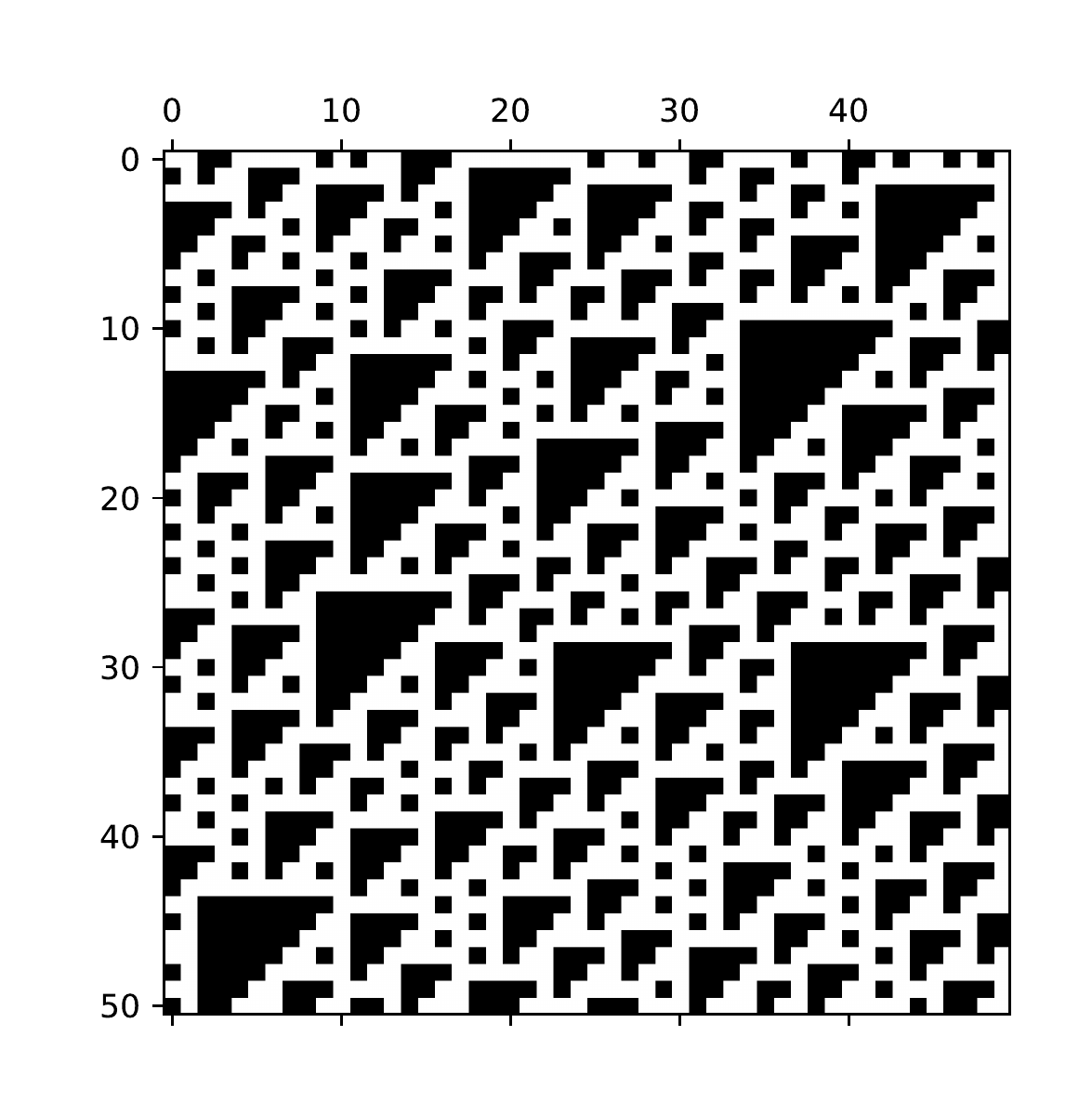}
  };
\end{tikzpicture}
\caption{Space-time diagram of ECA rule 110 on the left and its dual rule 137 on the right.}
\end{figure}
\label{dual}
\end{ex}
Each ECA contains at most two subautomata from the ECA class: itself and its dual ECA. Hence, the subautomaton relation would not produce a rich hierarchy. It is, however, a key concept for defining the CA emulation. Below, we restrict the terminology to ECA, but the theory can be generalized to any 1D CA in a straightforward way.

We will write $\2 = \{0, 1 \}$ and define $\2^+ = \bigcup_{k=1}^\infty \2^k$ to be the set of all finite nonempty binary sequences. 

\begin{defn}[Unravelling a Boolean function]~\\
Let $f: \2^3 \rightarrow \2$. We define $\widetilde{f}: \2^+ \rightarrow \2^+$ as
$$\widetilde{f}(b_1, b_2, \ldots, b_n) = f(b_1, b_2, b_3) \ldots f(b_{n-2}, b_{n-1}, b_n)$$
for any binary sequence $b_1 b_2 \ldots b_n$, $n \geq 3$.
\end{defn}
By $\widetilde{f}^k$ we simply mean the composition
$$\widetilde{f}^k  = \underbrace{\widetilde{f} \circ \widetilde{f} \circ \ldots \circ \widetilde{f}}_{k-\text{times}}.$$ Each iteration of $\widetilde{f}$ shortens the input size by 2, therefore we can notice that when we restrict the domain of $\widetilde{f}^k$ we get
$$\restr{\widetilde{f}^k}{\2^{3k}} : \2^k \times \2^k \times \2^k \rightarrow \2^k.$$
Hence, $\widetilde{f}^k$ can be interpreted as a ternary function operating on \textit{supercells} of $k$ bits and we obtain a CA $(\2^k, \widetilde{f}^k)$ whose dynamics is completely governed by the simple local rule $f$. Therefore, each ECA $(\2, f)$ gives rise to a series of CA

$$ (\2, f),\,(\2^2, \widetilde{f}^2),\,(\2^3, \widetilde{f}^3), \, (\2^4, \widetilde{f}^4), \, \ldots$$

\begin{ex} Suppose we have $f: \2^3 \rightarrow \2$, $f(b_1, b_2, b_3) = b_1 \oplus b_2 \oplus b_3$, where $\oplus$ is the XOR operation. In Figure \ref{ftilde_ex} we show the diagram of $\widetilde{f}^2$ computation.
\begin{figure}[h!]
\centering
\begin{tikzpicture}

\draw[step=0.4cm,color=black] (1.59,0) grid (4,.4);
\fill [fill=black!90] (2,0.01) -- (2.39,0.01) -- (2.39,.39) -- (1.99,.39) -- cycle;
\fill [fill=black!90] (3.6,0.01) -- (3.99,0.01) -- (3.99,.39) -- (3.6,.39) -- cycle;
\draw[orange, very thick] (1.63,.02) rectangle (2.37,.38);
\draw[orange, very thick] (2.43,.02) rectangle (3.17,.38);
\draw[orange, very thick] (3.23,.02) rectangle (3.97,.38);

\draw[step=0.4cm,color=black] (1.99,-.4) grid (3.6,0);
\fill [fill=black!90] (2.01,-.39) -- (2.39,-.39) -- (2.39,-.01) -- (2.01,-.01) -- cycle;
\fill [fill=black!90] (2.41,-.39) -- (2.79,-.39) -- (2.79,-.01) -- (2.41,-.01) -- cycle;
\fill [fill=black!90] (3.21,-.39) -- (3.59,-.39) -- (3.59,-.01) -- (3.21,-.01) -- cycle;

\draw[step=0.4cm,color=black] (2.39,-.8) grid (3.2,-.4);
\draw[orange, very thick] (2.43,-.77) rectangle (3.17,-.43);

\draw[->,>=stealth',semithick, orange] (1.4,.2) arc[radius=.45, start angle=110, end angle=260];
\node at (.9, -.2) {\scriptsize $\widetilde{f}^2$};

\end{tikzpicture}
\caption{ Diagram of $\widetilde{f}^2$ computing on supercells of size 2.}
\label{ftilde_ex}
\end{figure}
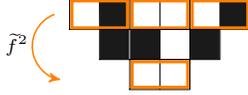
\end{ex}

Let $e: S \rightarrow T$ be a mapping between some finite sets. We define $\overline{e}: S^+ \rightarrow T^+$ simply as
$$\overline{e}(s_1, s_2, \ldots, s_n) = e(s_1), e(s_2), \ldots, e(s_n)$$ for each $s_1, \ldots, s_n \in S$, $n \in \N$.

\begin{defn}[ECA Emulation]
We say that $\mathrm{ca}_1= (\2, f)$ can be emulated by $\mathrm{ca}_2 =  (\2, g)$ with a supercell size $k$, if $\mathrm{ca}_1$ is a subautomaton of $(\2^k, \widetilde{g}^k)$. In such a case, we write $\mathrm{ca}_1 \leq_k \mathrm{ca}_2$.
\end{defn}

Hence, $ (\2, f) \leq_k (\2, g)$ holds if and only if there exists a one-to-one encoding 
$$\mathrm{enc}: \2 \rightarrow \2^k$$ 
such that for any initial configuration $c \in \2^3$ it holds that 
$$\mathrm{enc}(f(c)) = \widetilde{g}^k(\overline{\mathrm{enc}}(c)).$$
We call such encoding the \textit{witnessing encoding}. In other words, the following diagram commutes.
\begin{equation}
\begin{tikzpicture}[scale=0.8]
\node (23) at (-2, 1) {$\2^{3}$};
\node (2k3) at (1, 1) {$(\2^{k})^3$};

\node (2) at (-2, -1) {$\2$};
\node (2k) at (1, -1) {$\2^k$};
\node (circ) at (-.5,0) {$\circ$};

\draw [->] (23) -- (2k3) node [pos=0.5,anchor=north, yshift=13] {$\overline{\mathrm{enc}}$}; 
\draw [->]  (2) -- (2k)  node [pos=0.5,anchor=north, yshift=12] {$\mathrm{enc}$};
\draw [->] (23) -- (2) node [pos=0.5, anchor=west] {$f$};
\draw [->] (2k3) -- (2k) node [pos=0.5,anchor=east, xshift=18] {$\widetilde{g}^k$};  
\end{tikzpicture}
\label{comm_diag}
\end{equation}

We say that $\mathrm{ca}_1$ can be emulated by $\mathrm{ca}_2$ if there exists some $k$ for which $\mathrm{ca}_1 \leq_{k} \mathrm{ca}_2$, and we write  $\mathrm{ca}_1 \leq \mathrm{ca}_2$. If $\mathrm{ca}_1 \leq_1 \mathrm{ca}_2$ we say that $\mathrm{ca}_2$ emulates $\mathrm{ca}_1$ \textit{trivially}. If $\mathrm{ca} \leq_k \mathrm{ca}$ for $k>1$ we say that the $\mathrm{ca}$ is \textit{self-similar}.

In the next section, we give some simple proofs of the key properties of the $\leq$ relation. Namely, we prove that $\leq$ is a preorder and that whenever $\mathrm{ca}_1 \leq \mathrm{ca}_2$ and $\mathrm{ca}_1$ is Turing complete, then $\mathrm{ca}_2$ is also Turing complete.

\subsection{Properties of the Emulation Relation}
\paragraph{Preservation of Turing Completeness}
\begin{lemma}
Suppose that $(\2, f) \leq_k (\2, g)$ with a witnessing encoding $\mathrm{enc}$. Then, for each configuration $c \in \2^l$, $l \geq 3$, it holds that $\overline{\mathrm{enc}}(\widetilde{f}(c)) = \widetilde{g}^k(\overline{\mathrm{enc}}(c))$. 
\end{lemma}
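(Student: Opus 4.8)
The plan is to reduce the claim to the defining equation of $\leq_k$ by exploiting the locality of $\widetilde f$, $\widetilde g^k$ and $\overline{\mathrm{enc}}$. Write $c = b_1 b_2 \cdots b_l$ with each $b_i \in \2$. Then $\widetilde f(c)$ is the word of length $l-2$ whose $j$-th letter is $f(b_j, b_{j+1}, b_{j+2})$, so $\overline{\mathrm{enc}}(\widetilde f(c))$ is the concatenation of the $l-2$ length-$k$ blocks $\mathrm{enc}(f(b_j, b_{j+1}, b_{j+2}))$, $j = 1, \dots, l-2$. On the other side, $\overline{\mathrm{enc}}(c)$ is a binary word of length $kl$, and each application of $\widetilde g$ shortens a word by $2$, so $\widetilde g^k(\overline{\mathrm{enc}}(c))$ has length $kl - 2k = k(l-2)$; the two sides thus have equal length, and it suffices to show that for every $j \in \{1, \dots, l-2\}$ the $j$-th length-$k$ block of $\widetilde g^k(\overline{\mathrm{enc}}(c))$ equals $\mathrm{enc}(f(b_j, b_{j+1}, b_{j+2}))$.

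The auxiliary fact I would isolate first is a locality statement for $\widetilde g^k$: for a word $x = x_1 \cdots x_m$ the letter $\widetilde g^k(x)_i$ depends only on $x_i, x_{i+1}, \dots, x_{i+2k}$; equivalently, whenever $i \geq 1$ and $i + 3k - 1 \leq m$, the subword of $\widetilde g^k(x)$ in positions $i, \dots, i+k-1$ equals $\widetilde g^k(x_i x_{i+1} \cdots x_{i+3k-1})$. This follows by induction on $k$: the base case $k = 1$ is immediate from $\widetilde g(x)_i = g(x_i, x_{i+1}, x_{i+2})$, and the step uses $\widetilde g^{k+1} = \widetilde g \circ \widetilde g^k$, tracking that output position $i$ then depends on output positions $i, i+1, i+2$ of $\widetilde g^k$, hence on input positions $i$ through $i + 2(k+1)$.

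I would then apply this with $x = \overline{\mathrm{enc}}(c)$ and starting index $i = (j-1)k + 1$. In $\overline{\mathrm{enc}}(c)$ the $i'$-th length-$k$ block is exactly $\mathrm{enc}(b_{i'})$, so the length-$3k$ subword starting at position $(j-1)k+1$ is $\mathrm{enc}(b_j)\,\mathrm{enc}(b_{j+1})\,\mathrm{enc}(b_{j+2}) = \overline{\mathrm{enc}}(b_j b_{j+1} b_{j+2})$. Hence the $j$-th block of $\widetilde g^k(\overline{\mathrm{enc}}(c))$ equals $\widetilde g^k(\overline{\mathrm{enc}}(b_j b_{j+1} b_{j+2}))$, and applying the defining property of $(\2, f) \leq_k (\2, g)$ to the triple $(b_j, b_{j+1}, b_{j+2}) \in \2^3$ rewrites this as $\mathrm{enc}(f(b_j, b_{j+1}, b_{j+2}))$, which is exactly the $j$-th block of $\overline{\mathrm{enc}}(\widetilde f(c))$. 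Collecting the blocks over all $j$ yields the desired equality of words.

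I expect the only real obstacle to be the index bookkeeping: stating and proving the locality lemma cleanly, and checking that the length-$3k$ windows of $\overline{\mathrm{enc}}(c)$ line up precisely with encoded triples. Everything else is a direct unfolding of the definitions. An induction on $l$ instead of $k$ is possible but looks messier, since $\widetilde g^k$ does not interact simply with concatenations of two length-$l$ words, whereas the locality argument keeps the combinatorics transparent.
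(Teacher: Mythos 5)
Your proof is correct and takes essentially the same route as the paper's: both compare the two sides block by block, identifying the $j$-th length-$k$ block of each side with $\mathrm{enc}(f(b_j,b_{j+1},b_{j+2}))$ via the defining equation of $\leq_k$ applied to the triple $(b_j,b_{j+1},b_{j+2})\in\2^3$. The only difference is that you explicitly state and prove (by induction on $k$) the locality fact that a length-$k$ output block of $\widetilde{g}^k$ depends only on the corresponding three consecutive encoded supercells, a step the paper treats as immediate in its final equality $\widetilde{g}^k(\mathrm{enc}(c_{i-1}),\mathrm{enc}(c_i),\mathrm{enc}(c_{i+1}))=\widetilde{g}^k(\overline{\mathrm{enc}}(c))_i$.
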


\begin{proof}
Let $c \in \2^l$ for some $l \geq 3$. Then, $\widetilde{f}(c) \in \2^{l-2}$, and $\overline{\mathrm{enc}}(\widetilde{f}(c))$ is a sequence of length $l-2$, each of its elements being a binary $k$-tuple. Below, we show that for each $1 \leq i \leq l-2$ it holds that $\overline{\mathrm{enc}}(\widetilde{f}(c))_i = \widetilde{g}^k(\overline{\mathrm{enc}}(c))_i$.

\begin{align*}
\overline{\mathrm{enc}}(\widetilde{f}(c))_i &= \mathrm{enc}(\widetilde{f}(c)_i)\\
													&= \mathrm{enc}(f(c_{i-1}, c_i, c_{i+1}))\\
 													&= \widetilde{g}^k(\mathrm{enc}(c_{i-1}), \mathrm{enc}(c_i), \mathrm{enc}(c_{i+1}))\\
 													&= \widetilde{g}^k(\overline{\mathrm{enc}}(c))_i.
\end{align*}
\end{proof}
Using the previous result it can easily be shown that for each $t \in \N$ and for each $c \in \2^+$ sufficiently long it holds that:
\begin{equation}
\overline{\mathrm{enc}}(\widetilde{f}^t(c)) = \widetilde{g}^{kt}(\overline{\mathrm{enc}}(c))
\label{sim_kt}
\end{equation}

simply using induction on t.

Therefore, if $(\2, f) \leq_k (\2, g)$ then the diagram (\ref{comm_diag}) commutes for arbitrarily long configurations and for arbitrarily many iterations of the functions $\widetilde{f}$ and $\widetilde{g}^k$. Therefore, any space-time diagram produced by $\mathrm{ca}_1=(\2, f)$ can be efficiently encoded by $\mathrm{enc}$ to a space-time diagram of $\mathrm{ca}_2=(\2, g)$. Hence, if $\mathrm{ca}_1$ is Turing complete, $\mathrm{ca}_2$ must be also. 

\paragraph{Emulation Relation Is a Preorder}
Clearly, for each $\mathrm{ca}$ it holds that $\mathrm{ca} \leq_1 \mathrm{ca}$. Hence, $\leq$ is reflexive. 

\begin{lemma}
The relation $\leq$ is transitive.
\end{lemma}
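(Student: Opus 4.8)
The plan is to prove transitivity by explicitly building a witnessing encoding for the composite emulation out of the two given ones. Suppose $\mathrm{ca}_1 = (\2,f) \leq \mathrm{ca}_2 = (\2,g)$ and $\mathrm{ca}_2 \leq \mathrm{ca}_3 = (\2,h)$. Unwinding the definition, there are supercell sizes $k, m$ and witnessing encodings $\mathrm{enc}_1 : \2 \to \2^k$, $\mathrm{enc}_2 : \2 \to \2^m$ making the respective diagrams (\ref{comm_diag}) commute. I would claim that $\mathrm{ca}_1 \leq_{km} \mathrm{ca}_3$ with the witnessing encoding $\mathrm{enc} := \overline{\mathrm{enc}_2} \circ \mathrm{enc}_1 : \2 \to \2^{km}$ (encode a bit into a $k$-block via $\mathrm{enc}_1$, then re-encode each of those $k$ bits into an $m$-block via $\mathrm{enc}_2$). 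Two pieces of bookkeeping make this candidate legitimate: $\mathrm{enc}$ is one-to-one, since $\mathrm{enc}_1$ is injective and $\overline{\mathrm{enc}_2}$ is injective on strings of a fixed length (it acts blockwise by the injective $\mathrm{enc}_2$); and, as maps $\2^+ \to \2^+$, $\overline{\mathrm{enc}} = \overline{\mathrm{enc}_2} \circ \overline{\mathrm{enc}_1}$, because both simply replace every input bit by its $km$-bit code.

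The heart of the argument is then a short diagram chase. Fixing $c = (c_1,c_2,c_3) \in \2^3$, I would compute
\begin{align*}
\mathrm{enc}(f(c)) &= \overline{\mathrm{enc}_2}\bigl(\mathrm{enc}_1(f(c))\bigr)\\
&= \overline{\mathrm{enc}_2}\bigl(\widetilde{g}^{\,k}(\overline{\mathrm{enc}_1}(c))\bigr)\\
&= \widetilde{h}^{\,km}\bigl(\overline{\mathrm{enc}_2}(\overline{\mathrm{enc}_1}(c))\bigr)\\
&= \widetilde{h}^{\,km}\bigl(\overline{\mathrm{enc}}(c)\bigr),
\end{align*}
where the second equality is the commuting diagram for $\mathrm{ca}_1 \leq_k \mathrm{ca}_2$, the third is the iterated identity (\ref{sim_kt}) applied to the pair $(\2,g)\leq_m(\2,h)$ with encoding $\mathrm{enc}_2$, number of steps $t = k$, and input string $\overline{\mathrm{enc}_1}(c)$, and the last equality uses $\overline{\mathrm{enc}} = \overline{\mathrm{enc}_2} \circ \overline{\mathrm{enc}_1}$. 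This is precisely the condition defining $\mathrm{ca}_1 \leq_{km} \mathrm{ca}_3$, hence $\mathrm{ca}_1 \leq \mathrm{ca}_3$ and $\leq$ is transitive.

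The only real obstacle is a type/length check hidden in the third equality. One must remember that $\overline{\mathrm{enc}_1}(c)$, although we like to think of it as three supercells, is literally a binary string — of length $3k$ — and equation (\ref{sim_kt}) applies to it only once we know it is ``sufficiently long'', i.e. that $k$ successive applications of $\widetilde{g}$ do not exhaust it. Since $3k \geq 2k+1$ for every $k \geq 1$, the length stays positive throughout (ending at $k$), so the hypothesis holds. Conceptually, this step is where the two scalings combine: a single $\widetilde{g}$-step corresponds to $m$ steps of $\widetilde{h}$, so the $k$ steps of $\widetilde{g}$ used to evaluate $\widetilde{g}^k$ correspond to $km$ steps of $\widetilde{h}$, which is exactly why $km$ is the right supercell size. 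Everything else is routine substitution.
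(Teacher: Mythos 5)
Your proposal is correct and follows essentially the same route as the paper: compose the two witnessing encodings into $\overline{\mathrm{enc}_2}\circ\mathrm{enc}_1$, then chase the diagram, invoking the iterated identity (\ref{sim_kt}) for the pair $(\2,g)\leq(\2,h)$ at the third equality. Your added checks (injectivity, $\overline{\mathrm{enc}}=\overline{\mathrm{enc}_2}\circ\overline{\mathrm{enc}_1}$, and the length condition for applying (\ref{sim_kt})) are details the paper leaves implicit, but the argument is the same.
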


\begin{proof}
Suppose that $(\2, f) \leq_k (\2, g)$ with witnessing encoding $\varphi$ and $(\2, g) \leq_l (\2, h)$ with witnessing encoding $\psi$ for some $k, \, l \in \N$. We define $\mathrm{enc}= \overline{\psi} \circ \varphi: \2 \rightarrow \2^{kl}$. Then, for any configuration $c \in \2^3$ we have that:
\begin{align*}
\mathrm{enc}(f(c)) &= \overline{\psi} (\varphi(f(c)))\\
			&= \overline{\psi} ( \widetilde{g}^k (\overline{\varphi}(c)))\\
			&= \widetilde{h}^{kl} (\overline{\psi} \circ \overline{\varphi} (c))\\
			&= \widetilde{h}^{kl} (\overline{\mathrm{enc}} (c)).
\end{align*}
In the third equality we use the result from (\ref{sim_kt}). 
\end{proof}
Hence, $\leq$ is transitive and therefore, a preorder. 

In contrast, \cite{mazoyer} define that $(\2, f)$ can be emulated by $(\2, g)$ if there exist $k, \, l \in \N$ such that $(\2^k, \widetilde{f}^k)$ is a subautomaton of $(\2^l, \widetilde{g}^l)$. 
They present much deeper theoretical results about the notion in their paper. In contrast, we concentrate on explicitly computing which ECA can emulate which to form their computational hierarchy and discuss the results.

\paragraph{Emulations as Subalgebras}
For each ECA $(\2, g)$ we have a sequence of algebras:
$$ (\2, g),\,(\2^2, \widetilde{g}^2),\,(\2^3, \widetilde{g}^3), \, (\2^4, \widetilde{g}^4), \, \ldots$$

It holds that $(\2, f) \leq_k (\2, g)$ if and only if $(\2^k, \widetilde{g}^k)$ contains a two-element subalgebra isomorphic to $(\2, f)$; the witnessing encoding being the corresponding algebra homomorphism. 

\cite{israeli} have defined a different notion of CA simulation called the CA coarse-graining. It is interesting to notice that their notion is dual to the CA emulation we have defined in this paper. More specifically, the CA coarse-graining directly corresponds to the congruences of algebras. Namely, $(\2, g)$ can be coarse-grained into $(\2, f)$ if and only if there exists some $k \in \N$ such that $(\2, f)$ is isomorphic to some quotient algebra of $(\2^k, \widetilde{g}^k)$.

The interpretation of CA emulation as subalgebras comes in handy when developing an effective algorithm for computing the $\leq_k$ for a given supercell size $k$.

\subsection{Emulation Computing Algorithm}
Checking whether $\mathrm{ca}_1 \leq_k \mathrm{ca}_2$ gets infeasible for large $k$. In this section, we present an algorithm computing the $\leq_k$ relation for a given $k$ and compare it to the ``naive'' algorithm in terms of efficiency.

\begin{algorithm}[h!]
\footnotesize
    \SetKwInOut{Input}{Input}
    \SetKwInOut{Output}{Output}
    \Input{$\mathrm{ca}_1 = (\2, f), \, \mathrm{ca}_2=(\2, g)$, supercell size $k$}
    \Output{a witnessing $\mathrm{enc}$ if $\mathrm{ca}_1 \leq_k \mathrm{ca}_2$;\\
    "cannot emulate" message otherwise}
    \For{$\mathrm{enc}(0), \, \mathrm{enc}(1) \in \2^k$\\}
      {encoding is valid;\\
      	\For{$i, j, k \in \2$}{
      		\If {$\mathrm{enc}(f(i, j, k)) \neq \widetilde{g}^k(\overline{\mathrm{enc}}(i, j, k))$}
      		{encoding is not valid;\\
      		break
      		}
        }
        \If{encoding is valid}{
      return $\mathrm{enc}$}
      }
      {
        return "cannot emulate"\;
      }
    \caption{Naive algorithm checking whether $\mathrm{ca}_1 \leq_k \mathrm{ca}_2$.}
    \label{alg1}
\end{algorithm}

\begin{algorithm}[h!]
\footnotesize
    \SetKwInOut{Input}{Input}
    \SetKwInOut{Output}{Output}
    \Input{$\mathrm{ca}_2 = (\2, g)$, supercell size $k$}
    \Output{all $\mathrm{ca}_1 = (\2, f)$ together with a witnessing $\mathrm{enc}$
    such that $\mathrm{ca}_1 \leq_k \mathrm{ca}_2$}
    \For{$u, v \in \2^k$\\}
      {$\{u, v \}$ is a valid sublagebra of $(\2^k, \widetilde{g}^k)$;\\
      \For{$i, j, k \in \{u, v \} $}{
      		\If {$\widetilde{g}^k(i, j, k) \not \in \{u ,v \}$}
      		{$\{u ,v \}$ is not a valid subalgebra;\\
      		break}
      		}
      		\If{$\{u, v \}$ is a valid sublagebra}{
      		put $\mathrm{enc}(0)=u, \, \mathrm{enc}(1)=v$;\\
      		determine $f$ for which $f =\mathrm{enc}^{-1} \circ \widetilde{g}^k \circ \overline{\mathrm{enc}}$;\\
      		save ($f, \, \mathrm{enc}$)
      		}
        }

    \caption{Subalgebra algorithm computing all $\mathrm{ca}_1$ for which $\mathrm{ca}_1 \leq_k \mathrm{ca}_2$.}
    \label{alg2}
\end{algorithm}

The naive algorithm simply goes through all possible encodings to determine whether $\mathrm{ca}_1 \leq_k \mathrm{ca}_2$. The subalgebra algorithm computes all two-element subalgebras of $(\2^k, \widetilde{g}^k)$ and thus, determines all $\mathrm{ca}_1$ for which $\mathrm{ca}_1 \leq \mathrm{ca}_2$.

The time complexity with respect to the supercell size $k$ is in $\mathcal{O}(k \cdot 2^{2k})$ for both algorithms. However, Algorithm \ref{alg1} needs to iterate over all ECA to compute the same result as Algorithm \ref{alg2}. Experimentally, we have indeed observed that Algorithm \ref{alg1} takes approximately 250 times longer than Algorithm \ref{alg2} to compute the emulated automata.

\section{Emulation Hierarchy of ECA}
In this section, we present a computational hierarchy based on the emulation relation. We were able to compute $\leq_k$ for supercell size $k$ ranging from 1 to 11 due to the computational limitations. 

In Example \ref{dual} we have seen that for $\mathrm{ca} =(\2, f)$ and its dual $\mathrm{ca}' =(\2, f')$ it holds that $\mathrm{ca} \leq_1 \mathrm{ca}'$ and $\mathrm{ca}' \leq_1 \mathrm{ca}$. Thus, $\leq_1$ is an equivalence relation on the set of ECA, each class containing exactly $f$ and its dual $f'$ (those might coincide for some rules). From the transitivity of $\leq$ we know that $\mathrm{ca}$ can emulate (and be emulated by) exactly the same rules as $\mathrm{ca}'$. Using a simple program, we obtained 136 different ECA equivalence classes given by $\leq_1$. In the following text, we will identify each equivalence class with the ECA it contains having the smaller Wolfram number. We show the hierarchy for such representatives and for supercells of size $k$ ranging from 2 to 11. Specifically, whenever we found that $\mathrm{ca}_1 \leq_k \mathrm{ca}_2$ for some $k \in \{2, 3, \ldots , 11\}$ we represent it by the following diagram. \\~\\
\begin{tikzpicture}
\scriptsize
\centering
\draw (-4, 0) node {};
\draw (0, .8) node (up) [draw, align=left]{\begin{varwidth}{2cm}$\mathrm{ca}_1$\end{varwidth}};
\draw (0, 0) node (down) [draw, align=left]{\begin{varwidth}{2cm}$\mathrm{ca}_2$\end{varwidth}};
\draw  (up) -- (down) node {};
\end{tikzpicture}

Many ECA are capable of emulating simple rules, such as rule 0 or the identity rule 204. Adding so many edges to the diagram would make it unreadable. Hence, we present the hierarchy in three parts.

\newpage
\onecolumn

\begin{figure}[h!]

\begin{tikzpicture}
\scriptsize

\draw (2.2, 12) node (184)[draw, align=left]{\begin{varwidth}{2cm}$184$\end{varwidth}};
\draw (-4, 14) node {\textbf{Main Part of the Hierarchy:}};
\draw  (184) -- (184) node {};

\draw ($ (184) + (0,1) $) node (57)[draw, align=left]{\begin{varwidth}{2cm}$57$\end{varwidth}};
\draw  ($ (57) + (-3,0) $) node (6) [draw, align=left]{\begin{varwidth}{2cm}$6$\end{varwidth}};
\draw ($ (57) + (-2,0) $) node (20)[draw, align=left]{\begin{varwidth}{2cm}$20$\end{varwidth}};
\draw ($ (57) + (-1,0) $) node (56)[draw, align=left]{\begin{varwidth}{2cm}$56$\end{varwidth}};

\draw ($ (57) + (1,0) $) node (98)[draw, align=left]{\begin{varwidth}{2cm}$98$\end{varwidth}};
\draw ($ (57) + (2,0) $) node (148)[draw, align=left]{\begin{varwidth}{2cm}$148$\end{varwidth}};
\draw ($ (57) + (3,0) $) node (134)[draw, align=left]{\begin{varwidth}{2cm}$134$\end{varwidth}};

\draw ($ (134) + (0,1) $) node (97) [draw, align=left]{\begin{varwidth}{2cm}$97$\end{varwidth}};
\draw ($ (148) + (0,1) $) node (41) [draw, align=left]{\begin{varwidth}{2cm}$41$\end{varwidth}};
\draw ($ (148) + (0,-1) $) node (176) [draw, align=left]{\begin{varwidth}{2cm}$176$\end{varwidth}};
\draw ($ (134) + (0,-1) $) node (162) [draw, align=left]{\begin{varwidth}{2cm}$162$\end{varwidth}};

\draw  (134) -- (184) node {};
\draw  (97) -- (134) node {};
\draw  (41) -- (148) node {};

\draw  (148) -- (184) node {};
\draw  (148) -- (176) node {};
\draw  (20) -- (184) node {};
\draw  (98) -- (184) node {};
\draw  (56) -- (184) node {};
\draw  (6) -- (184) node {};
\draw  (57) -- (184) node {};
\draw  (134) -- (162) node {};
\draw [-] (184) edge[loop below]node{} (184);

\draw (-1.0, 8) node (34)[draw, align=left]{\begin{varwidth}{2cm}$34$\end{varwidth}};
\draw ($ (34) + (-1,2.5) $) node (on34) [draw, align=left]{\begin{varwidth}{1.2cm} 2, 10, 66, 42, 46, 172, 130, 138 \end{varwidth}};
\draw ($ (34) + (-1,1.3) $) node (14) [draw, align=left]{\begin{varwidth}{2cm} 14 \end{varwidth}};
\draw ($ (14) + (3.8,0) $) node (188) [draw, align=left]{\begin{varwidth}{2cm} 188 \end{varwidth}};

\draw[opacity=0.2]  (57) -- (34) node {};
\draw[opacity=0.2]  (56) -- (34) node {};
\draw[opacity=0.2]  (6) -- (34) node {};
\draw (on34) -- (34) node {};
\draw (14) -- (34) node {};
\draw (188) -- (34) node {};

\draw ($ (34) + (2.7,0) $) node (48)[draw, align=left]{\begin{varwidth}{2cm}$48$\end{varwidth}};
\draw ($ (on34) + (3.1,0) $) node (on48) [draw, align=left]{\begin{varwidth}{1.2cm} 16, 24, 80, 112,  116,   144, 208, 216 \end{varwidth}};
\draw ($ (14) + (4.6,0) $) node (84) [draw, align=left]{\begin{varwidth}{2cm} 84\end{varwidth}};
\draw ($ (14) + (1.2,0) $) node (onl48) [draw, align=left]{\begin{varwidth}{2cm} 52, 88 \end{varwidth}};
\draw ($ (84) + (.9,0) $) node (152) [draw, align=left]{\begin{varwidth}{2cm} 152 \end{varwidth}};
\draw[opacity=0.2]  (57) -- (48) node {};
\draw[opacity=0.2]  (98) -- (48) node {};
\draw[opacity=0.2]  (20) -- (48) node {};
\draw  (on48) -- (48) node {};
\draw  (152) -- (48) node {};
\draw  (onl48) -- (48) node {};
\draw  (84) -- (48) node {};

\draw ($ (34) + (1.3,0) $) node (15) [draw, align=left]{\begin{varwidth}{2cm}$15$\end{varwidth}};
\draw ($ (on34) + (1.5,0) $) node (on15)[draw, align=left]{\begin{varwidth}{1.2cm}82, 43, 180 53, 142, 11\end{varwidth}};

\draw  (on15) -- (15) node {};
\draw [-] (15) edge[loop below]node{} (15);
\draw[opacity=0.2]  (20) -- (15) node {};
\draw (onl48) -- (15) node {};
\draw  (14) -- (15) node {};
\draw[opacity=0.2]  (148) -- (15) node {};

\draw  ($ (15) + (2.7,0) $) node (85) [draw, align=left]{\begin{varwidth}{2cm}$85$\end{varwidth}};
\draw ($ (on34) + (5,0) $) node (onl85)[draw, align=left]{\begin{varwidth}{1.1cm} 154, 113,  212, 81, 26, 27 \end{varwidth}};

\draw ($ (14) + (2.6,0) $) node (onr85)[draw, align=left]{\begin{varwidth}{1.8cm} 38, 74 \end{varwidth}};

\draw  (onl85) -- (85) node {};
\draw  (onr85) -- (85) node {};
\draw [-] (85) edge[loop below]node{} (85);
\draw[opacity=0.2]  (6) -- (85) node {};
\draw (84) -- (85) node {};
\draw[opacity=0.2]  (134) -- (85) node {};

\draw  (onr85) -- (34) node {};

\draw ($ (48) + (2.5,0) $) node (192)[draw, align=left]{\begin{varwidth}{2cm}$192$\end{varwidth}};
\draw ($ (152) + (2,0) $) node (28) [draw, align=left]{\begin{varwidth}{2cm} 28 \end{varwidth}};
\draw ($ (152) + (1.3,0) $) node (94) [draw, align=left]{\begin{varwidth}{2cm} 94 \end{varwidth}};
\draw ($ (on34) + (6.2,0) $) node (on192) [draw, align=left]{\begin{varwidth}{.8cm} 9, 13, 7, 224 \end{varwidth}};
\draw  (28) -- (192) node {};
\draw (188) -- (192) node {};
\draw (94) -- (192) node {};
\draw[opacity=0.2] (97) -- (192) node {};
\draw (on192) -- (192) node {};
\draw [-] (192) edge[loop below]node{} (192);

\draw ($ (192) + (1.7,0) $) node (136)[draw, align=left]{\begin{varwidth}{2cm}$136$\end{varwidth}};
\draw ($ (28) + (1.8,0) $) node (70) [draw, align=left]{\begin{varwidth}{2cm} 70 \end{varwidth}};
\draw ($ (28) + (.9,0) $) node (156) [draw, align=left]{\begin{varwidth}{2cm} 156 \end{varwidth}};
\draw ($ (on34) + (7.9,0) $) node (on136) [draw, align=left]{\begin{varwidth}{1cm} 21, 65, 69, 168 \end{varwidth}};
\draw  (152) -- (136) node {};
\draw[opacity=0.2]  (41) -- (136) node {};
\draw (70) -- (136) node {};
\draw (94) -- (136) node {};
\draw (156) -- (136) node {};
\draw (on136) -- (136) node {};
\draw (156) -- (192) node {};
\draw [-] (136) edge[loop below]node{} (136);

\draw ($ (136) + (2, 0) $) node (200)[draw, align=left]{\begin{varwidth}{2cm}$200$\end{varwidth}};
\draw ($ (70) + (1,0) $) node (33) [draw, align=left]{\begin{varwidth}{2cm} 33 \end{varwidth}};
\draw ($ (on34) + (9.9,0) $) node (on200) [draw, align=left]{\begin{varwidth}{.8cm} 1, 5, 29, 37 \end{varwidth}};
\draw  (28) -- (200) node {};
\draw  (33) -- (200) node {};
\draw (70) -- (200) node {};
\draw (156) -- (200) node {};
\draw (on200) -- (200) node {};

\draw ($ (200) + (1,0) $) node (132) [draw, align=left]{\begin{varwidth}{2cm} 132 \end{varwidth}};
\draw  (33) -- (132) node {};

\draw  ($ (15) + (1,-1.2) $) node (30) [draw, align=left]{\begin{varwidth}{2cm}$30$\end{varwidth}};

\draw  ($ (30) + (1,0) $) node (45) [draw, align=left]{\begin{varwidth}{2cm}$45$\end{varwidth}};

\draw  ($ (30) + (2,0) $) node (86) [draw, align=left]{\begin{varwidth}{2cm}$86$\end{varwidth}};

\draw  ($ (30) + (3,0) $) node (89) [draw, align=left]{\begin{varwidth}{2cm}$89$\end{varwidth}};

\draw ($ (34) + (-1,0) $) node (102) [draw, align=left]{\begin{varwidth}{2cm}$102$\end{varwidth}};
\draw [-] (102) edge[loop below]node{} (102);

\draw($ (14) + (-1,0) $) node (54) [draw, align=left]{\begin{varwidth}{2cm}$54$\end{varwidth}};
\draw ($ (102) + (-1,0) $) node (50)[draw, align=left]{\begin{varwidth}{2cm}$50$\end{varwidth}};
\draw  (54) -- (50) node {};

\draw ($ (54) + (-1,0) $) node (44) [draw, align=left]{\begin{varwidth}{2cm}$44$\end{varwidth}};
\draw ($ (50) + (-1,0) $) node (12)[draw, align=left]{\begin{varwidth}{2cm}$12$\end{varwidth}};
\draw  (44) -- (12) node {};

\draw  ($ (44) + (-1,0) $)  node (100) [draw, align=left]{\begin{varwidth}{2cm}$100$\end{varwidth}};
\draw ($ (12) + (-1,0) $) node (68)[draw, align=left]{\begin{varwidth}{2cm}$68$\end{varwidth}};
\draw  (100) -- (68) node {};

\draw ($ (132) + (.9,0) $) node (4)[draw, align=left]{\begin{varwidth}{2cm}$4$\end{varwidth}};
\draw ($ (33) + (.7,0) $) node (36)[draw, align=left]{\begin{varwidth}{2cm}$36$\end{varwidth}};
\draw ($ (36) + (.8,0) $) node (104) [draw, align=left]{\begin{varwidth}{2cm}$104$\end{varwidth}};

\draw ($ (104) + (.8,0) $) node (72)[draw, align=left]{\begin{varwidth}{2cm}$72$\end{varwidth}};

\draw  (36) -- (4) node {};
\draw  (104) -- (4) node {};
\draw  (72) -- (4) node {};

\draw ($ (72) + (.8,0) $)  node (108) [draw, align=left]{\begin{varwidth}{2cm}$108$\end{varwidth}};
\draw ($ (4) + (1.6,0) $) node (76)[draw, align=left]{\begin{varwidth}{2cm}$76$\end{varwidth}};
\draw  (108) -- (76) node {};

\end{tikzpicture}
\caption{Emulation Hierarchy of ECA computed for supercell sizes ranging from 2 to 11; main part of the diagram. Some edges are depicted in light gray purely for better understandability. A looped arrow marks self-similar rules.}
\label{main}
\end{figure}
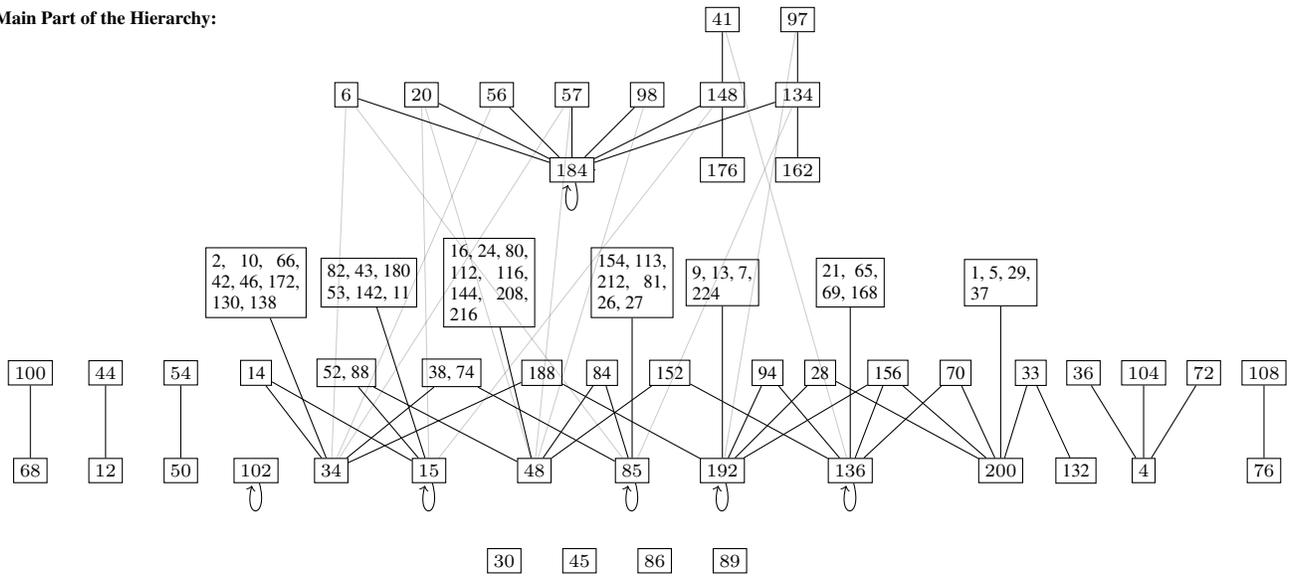

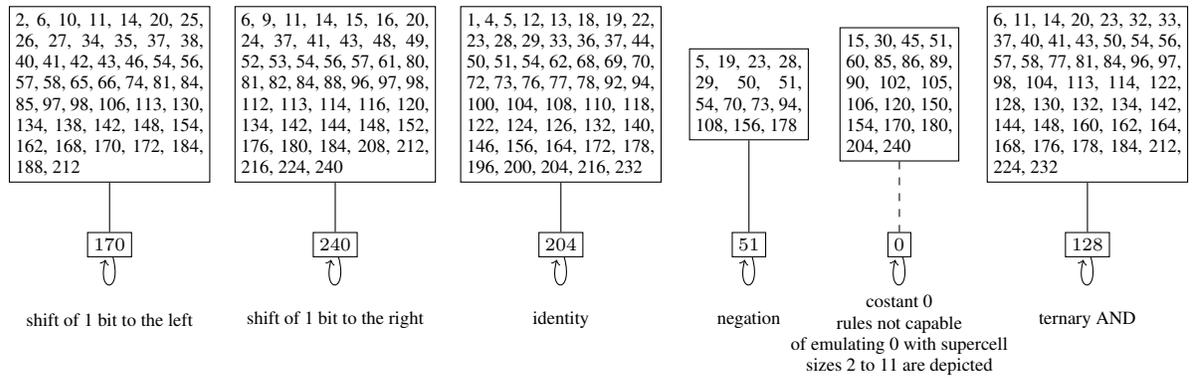
\begin{figure}[h!]

\begin{tikzpicture}
\scriptsize
\draw (-5, 6) node {\textbf{Frequently Emulated Rules:}};

\centering
\draw (-4, 2.5) node (170) [draw, align=left]{\begin{varwidth}{2cm}$170$\end{varwidth}};
\draw ($ (170) + (0,2) $) node (on170) [draw, align=left]{\begin{varwidth}{2.5cm} 2, 6, 10, 11, 14, 20, 25, 26, 27, 34, 35, 37, 38, 40, 41, 42, 43, 46, 54, 56, 57, 58, 65, 66, 74, 81, 84, 85, 97, 98, 106, 113, 130, 134, 138, 142, 148, 154, 162, 168, 170, 172, 184, 188, 212
\end{varwidth}};

\draw ($ (170) + (0,-1) $) node (170text){shift of 1 bit to the left};
\draw  (on170) -- (170) node {};
\draw [-] (170) edge[loop below]node{} (170);

\draw ($ (170) + (3, 0) $) node (240) [draw, align=left]{\begin{varwidth}{2cm}$240$\end{varwidth}};
\draw ($ (240) + (0,2) $) node (on240) [draw, align=left]{\begin{varwidth}{2.5cm} 6, 9, 11, 14, 15, 16, 20, 24, 37, 41, 43, 48, 49, 52, 53, 54, 56, 57, 61, 80, 81, 82, 84, 88, 96, 97, 98, 112, 113, 114, 116, 120, 134, 142, 144, 148, 152, 176, 180, 184, 208, 212, 216, 224, 240 \end{varwidth}};
\draw  (on240) -- (240) node {};
\draw ($ (240) + (0,-1) $) node (240text){shift of 1 bit to the right};
\draw [-] (240) edge[loop below]node{} (240);

\draw ($ (170) + (6, 0) $) node (204) [draw, align=left]{\begin{varwidth}{2cm}$204$\end{varwidth}};
\draw ($ (204) + (0,2) $) node (on204) [draw, align=left]{\begin{varwidth}{2.5cm}1, 4, 5, 12, 13, 18, 19, 22, 23, 28, 29, 33, 36, 37, 44, 50, 51, 54, 62, 68, 69, 70, 72, 73, 76, 77, 78, 92, 94, 100, 104, 108, 110, 118, 122, 124, 126, 132, 140, 146, 156, 164, 172, 178, 196, 200, 204, 216, 232
\end{varwidth}};
\draw ($ (204) + (0,-1) $) node (204text){identity};
\draw  (on204) -- (204) node {};
\draw [-] (204) edge[loop below]node{} (204);

\draw ($ (204) + (2.5,0) $) node (51) [draw, align=left]{\begin{varwidth}{2cm}$51$\end{varwidth}};
\draw ($ (51) + (0,2) $) node (on51) [draw, align=left]{\begin{varwidth}{1.5cm}5, 19, 23, 28, 29, 50, 51, 54, 70, 73, 94, 108, 156, 178
\end{varwidth}};
\draw  (on51) -- (51) node {};
\draw ($ (51) + (0,-1) $) node (51text){negation};
\draw [-] (51) edge[loop below]node{} (51);

\draw ($ (51) + (2,0) $) node (0) [draw, align=left]{\begin{varwidth}{2cm}$0$\end{varwidth}};
\draw ($ (0) + (0,2) $)node (on0) [draw, align=left]{\begin{varwidth}{1.5cm} 15, 30, 45, 51, 60, 85, 86, 89, 90, 102, 105, 106, 120, 150, 154, 170, 180, 204, 240 \end{varwidth}};
\draw ($ (0) + (0,-1.2) $) node (0text){\begin{varwidth}{3cm}  \centering     costant 0\\ rules not capable\\ of emulating 0 with supercell\\ sizes 2 to 11 are depicted \end{varwidth}};
\draw  (on0) -- (0) [dashed] node {};
\draw [-] (0) edge[loop below]node{} (0);

\draw  ($ (0) + (2.5,0) $) node (128) [draw, align=left]{\begin{varwidth}{2cm}$128$ \end{varwidth}};
\draw ($ (128) + (0,2) $) node (on128) [draw, align=left]{\begin{varwidth}{2.5cm} 6, 11, 14, 20, 23, 32, 33, 37, 40, 41, 43, 50, 54, 56, 57, 58, 77, 81, 84, 96, 97, 98, 104, 113, 114, 122, 128, 130, 132, 134, 142, 144, 148, 160, 162, 164, 168, 176, 178, 184, 212, 224, 232
\end{varwidth}};
\draw  (on128) -- (128) node {};
\draw ($ (128) + (0,-1) $) node (128text){ternary AND};
\draw [-] (128) edge[loop below]node{} (128);

\end{tikzpicture}
\caption{Emulation Hierarchy of ECA computed for supercell sizes ranging from 2 to 11, this part shows the most frequently emulated rules.}
\label{trivial}
\end{figure}

\begin{figure}[h!]
\begin{tikzpicture}
\scriptsize

\draw (-5, -6.5) node {\textbf{Emulated Linear Rules:}};

\draw (2, -8.5) node (90) [draw, align=left]{\begin{varwidth}{2cm}$90$\end{varwidth}};

\draw  ($ (90) + (.5,1) $) node (146) [draw, align=left]{\begin{varwidth}{2cm}$146$\end{varwidth}};
\draw($ (146) + (0,1) $) node (22) [draw, align=left]{\begin{varwidth}{2cm}$22$\end{varwidth}};
\draw  ($ (90) + (-1.5, 1) $) node (on90) [draw, align=left]{\begin{varwidth}{2cm}18, 26, 82, 94, 122, 126, 154, 164, 180 \end{varwidth}};

\draw ($ (90) + (2,0) $) node (150) [draw, align=left]{\begin{varwidth}{2cm}$150$\end{varwidth}};
\draw ($ (150) + (0, 1) $) node (105) [draw, align=left]{\begin{varwidth}{2cm}$105$\end{varwidth}};
\draw  ($ (150) + (1, 0) $) node (60) [draw, align=left]{\begin{varwidth}{2cm}$60$\end{varwidth}};
\draw  (105) -- (150) node {};
\draw  (on90) -- (90) node {};
\draw  (22) -- (146) node {};
\draw  (146) -- (90) node {};
\draw [-] (90) edge[loop below]node{} (90);
\draw [-] (150) edge[loop below]node{} (150);
\draw [-] (60) edge[loop below]node{} (60);
\end{tikzpicture}
\caption{Emulation Hierarchy of ECA computed for supercell sizes ranging from 2 to 11, this part shows rules emulating particular linear ECA.}
\label{linear}
\end{figure}
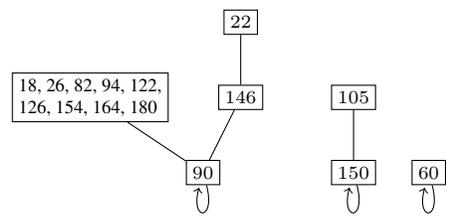
\newpage
\twocolumn 

\paragraph{Main Part}
For compactness, we often show multiple rules in the same node in Figure \ref{main}. However, that does not imply that such rules can emulate one another. Rules in one node are emulated and can emulate exactly the same rules contained in the main part in Figure \ref{main}. However, they do not necessarily emulate the same trivial or linear rules in Figures \ref{trivial} and \ref{linear}. Therefore, we note that rules in the same node do not necessarily have an identical computational capacity.

A task frequently studied in the CA environment is to determine the majority of 0's and 1's in the input configuration. The strict version of the task requires the CA to enter a homogenous state of all 0's or all 1's to indicate the result. It has been shown that no ECA can solve this strict version. If we relax the requirements on the form of the output, \cite{majority_comp} have shown that rule 184 solves the majority task exactly. From the main part of the hierarchy, we can see that by encoding the input configuration in a simple way, at least nine more ECA and their dual rules can solve the task.

\begin{figure}[h!]
\centering
    \begin{tikzpicture}[thick, every node/.style={inner sep=0,outer sep=0}]
  \node at (0, 0) {
     \includegraphics[width=0.315\linewidth]{./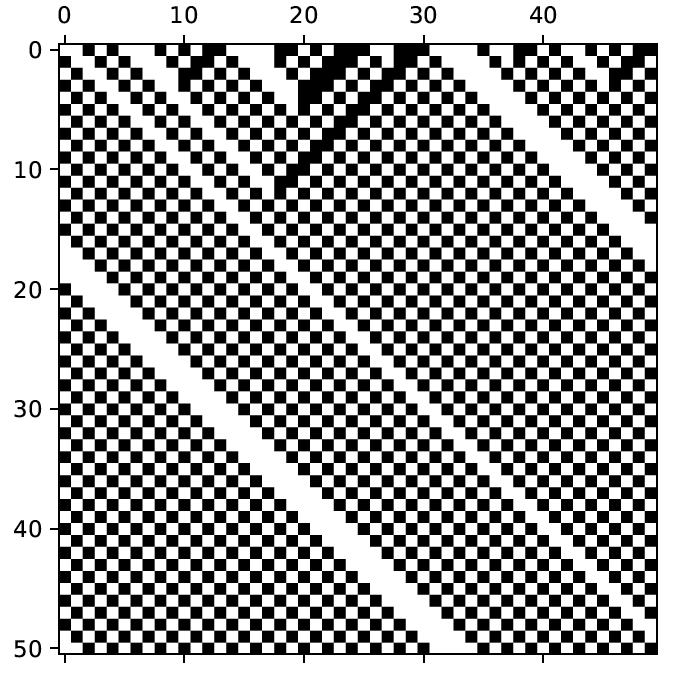}
  };
  \node at (4.2, 0.05) {
     \includegraphics[width=0.7\linewidth]{./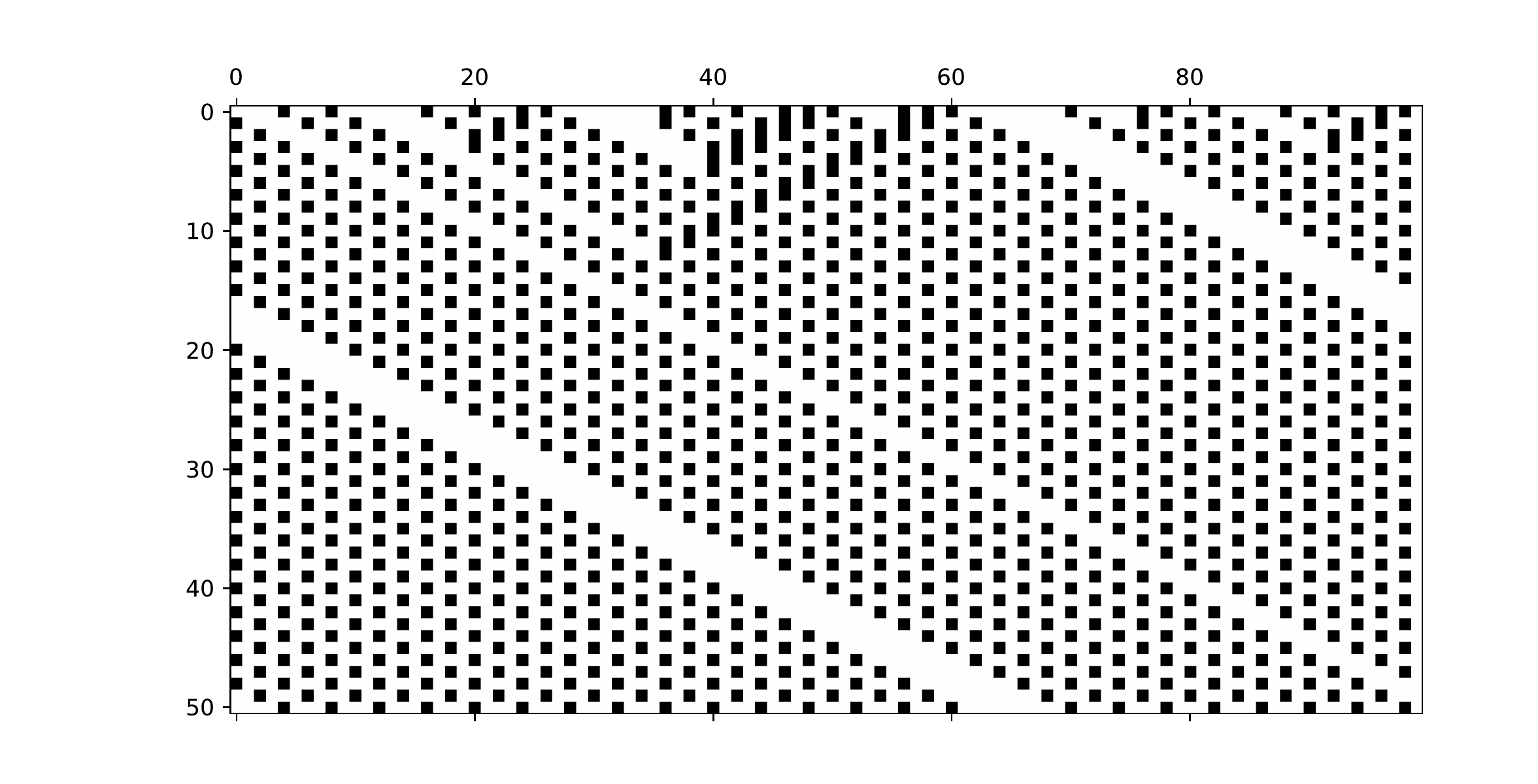}
  };
\end{tikzpicture}
\caption{Space-time diagram of ECA rule 184 on the left. The emulated computation by rule 148 showed on the right. The emulation uses supercells of size 2 and every second time step is depicted.}
\end{figure}

\paragraph{Frequently Emulated Rules}
The CA emulation offers a natural definition of a CA supporting memory: we say that a CA has a \textit{capability of perfect memory} if it can emulate the identity rule 204 (alternatively, we could also tolerate the emulation of shifting rules or the negation). Hence, we have a practical criterion for checking whether a given CA can support memory effectively, i.e., with a small supercell size.
Rules, which cannot emulate either of the rules 51, 204, 170, and 240 with the examined supercell sizes are: 0, 3, 7, 8, 17, 21, 22, 30, 32, 45, 60, 64, 86, 89, 90, 102, 105, 128, 136, 150, 160, 192. We can notice that they include trivial rules (0, 3, 8, 128), linear rules (60, 90, 105, 150), as well as the chaotic rules (22, 30, 45, 86, 89).

\paragraph{Emulating Linear Rules}
We say an ECA is linear, if its local rule $f$ is a linear Boolean function; i. e., it satisfies $f(x+y)= f(x)+f(y)$ for all $x, y \in \2^3$. Such ECA can be studied algebraically which lead to exact results describing e.g., their transients or the structure of attractors (\cite{algCA}, \cite{jenLinearCAAndRecurringSequences}).

\cite{jenExactSolvability} has shown that the nonlinear rules 18, 126, and 146 can be mapped onto the linear rule 90. Figure \ref{linear} agrees with the results and further shows other nonlinear rules with this property which makes them interesting candidates for further algebraic studies.

\paragraph{Bottom Part of the Hierarchy}
$\quad$
\begin{table}[h!]
\centering
\scriptsize
\begin{tabular}{ |m{3.6cm}|m{.5cm}|m{1cm}|  }
 \hline
 \multicolumn{3}{|c|}{\textbf{Top and Bottom of the Emulation Hierarchy}} \\
 \hline
 \centering
 ECA rules &  \multicolumn{2}{l|}{Number of rules they can emulate}
 \tabularnewline
 \hline
41, 97 & 9 & \multirow{3}{*}{ Top}\\
 \arrayrulecolor{black!30}\cline{1-2}
 \arrayrulecolor{black}
6, 20,  54, 57, 134, 148 & 7 &\\
 \arrayrulecolor{black!30}\cline{1-2}
 \arrayrulecolor{black}
14, 37, 56, 84, 94, 98, 156 & 6 &\\
 \hline
0, 3, 8, 17, 60, 64, 90, 102, 105, 106, 120, 150, 170, 204, 240 & 1 & \multirow{2}{*}{Bottom}\\
 \arrayrulecolor{black!30}\cline{1-2}
 \arrayrulecolor{black}
30, 45, 86, 89 & 0 & \\
\hline
\end{tabular}
\caption{ECA Rules at the Top and Bottom of the Emulation Hierarchy computed for supercells of size 1 to 11.}
\label{2D}
\end{table}

There are exactly four rules (and their duals) that seem to be unable to emulate any ECA non-trivially (i.e., with a supercell size larger than one). Rule 86 is obtained from rule 30 by changing the role of the ``left and right neighbor''. Rule 89 is obtained in the same way from the dual of rule 45. All the four rules seem to belong to the most disordered ECA according to metrics such as the compression size of space-time diagrams (\cite{zenil_compression}) or the transient classification (\cite{hudcova}). Due to the seemingly random patterns it produces, rule 30 was implemented as a pseudorandom number generator in Mathematica.

It is interesting that the most chaotic ECA seem to be exactly those unable to emulate any ECA non-trivially. To further explore this, we asked whether the four chaotic CA can emulate \textit{any} 1D nearest-neighbors CA non-trivially, not just the ECA.
 
Let $(S, f)$ be a CA. Since $S$ is finite, there must exist some $s \in S$ and $k \in \N$ such that $\widetilde{f}^k(s, s, s, \ldots, s) = s$. Hence, for any CA with one state $(\{q \}, g)$ the encoding $\mathrm{enc}: q \mapsto (\underbrace{s, s, \ldots, s}_{k-\text{times}})$ witnesses that $(S, f)$ can emulate $(\{q \}, g)$. Therefore, any CA can emulate every CA with one state. The question is whether the four CA can emulate any CA with more than one state non-trivially.

We have examined the sequence of algebras
$$ (\2, f),\,(\2^2, \widetilde{f}^2),\,(\2^3, \widetilde{f}^3), \, (\2^4, \widetilde{f}^4), \, \ldots, (\2^{11}, \widetilde{f}^{11})$$
up to supercell size 11 for rules 30, 45, 86, and 89, and computed all their subalgebras, not only the two-element ones. We found out that all of the four CA only contain subalgebras with one element, hence, they cannot emulate any CA with more than one state for supercell sizes 2 to 11. 

It remains an open problem whether this would hold for supercells of arbitrary size. However, we can conclude that if any of the four chaotic ECA can emulate any non-trivial automaton, it cannot do so effectively, i.e., with a supercell of small size. 

We note that analogous experiments were conducted for the coarse-graining relation. (\cite{dzwinel}) showed that the rules 30 and 45, together with their negations, seem to be exactly those that cannot be coarse-grained non-trivially. These results motivate us to study chaos from a computational perspective in the next section.

\section{Chaos In Cellular Automata}
Intuitively, a chaotic CA has unpredictable dynamics and seemingly random space-time diagrams with no apparent higher-order structures forming. Below, we discuss examples of formal definitions of chaos suggested in the past and propose a new one based on the CA emulation notion.

For discrete systems, chaos is classically studied through topological dynamics, and it is typically connected to the system's sensitivity to initial conditions (\cite{devaney}, \cite{cattaneo}). Other studies of chaos in CA examine e.g., their input entropy (\cite{wuenscheClassifyingCellularAutomataAutomatically}) or properties of the local rule itself (\cite{wuenscheCAEncryption}); a comprehensive study of ECA, including their chaotic behavior, was introduced by \cite{chuaANonlinearDynamicsPerspectiveOfWolframsNKOS}.

From a great review by \cite{martinezANoteOnECAClassification} one can see that some definitions of chaos admit either the shift CA (\cite{devaney}) or linear CA (\cite{cattaneo}) to be chaotic. In the first case, shifting a configuration by one bit does not intuitively feel very unpredictable. For linear automata, it is known that they can be simulated on a computer significantly faster than general CA (\cite{robinson_fast}). Hence, their dynamics can be predicted quite efficiently. Below, we propose a new, much stricter definition of chaotic behavior.

\begin{defn}
We call a 1D nearest-neighbors CA \textit{computationally chaotic} if it cannot emulate any 1D nearest-neighbors CA with more than one state non-trivially.
\end{defn}

Proving that a particular CA is computationally chaotic would, in principle, require verifying an infinite amount of conditions and might require deep theoretical insight into the dynamics of the CA. However, it is useful as a new theoretical notion, which formalizes the unpredictability of a CA. Indeed, if $\mathrm{ca}_1 \leq_k \mathrm{ca}_2$ then for a subset of initial conditions, the result of $\mathrm{ca}_2$ run for $k$ time-steps is equivalent to simply running $\mathrm{ca}_1$ for 1 time-step. Hence, at least some part of its dynamics can be predicted more effectively. In contrast, computationally chaotic CA cannot contain any simpler CA as a substructure in its space-time dynamics. This suggests that no part of its dynamics can be predicted efficiently. Hence, the definition seems to agree with the intuitive understanding of unpredictability.

In CA with trivial dynamics, structures tend to die out quite quickly. This enables such rules to emulate either the rule 0 non-trivially (this can be seen from Figure \ref{trivial}) or to be self-similar (e.g., rules 0, 240, or the shift rule). Thus, they are not computationally chaotic. As linear CA are self-similar, it follows that they are not computationally chaotic either. 

From the results we presented, it follows that the only ECA that could be computationally chaotic are rules 30 and 45, together with their symmetrical rules. Hence, we can form the following hypothesis.

\begin{hyp}
ECA rules 30 and 45 are computationally chaotic.
\end{hyp}

Proving this result would imply that such rules cannot compute any task in the sense of CA emulation.

We note that the definition can be extended to CA in arbitrary dimensions with various neighborhood shapes in a straightforward way. We also note that the definition does not depend on whether the CA operates on a finite or infinite grid and is purely a property of its local rule.

Most definitions of chaos in discrete systems are not related to their computational capacity. Hence, there is an interesting ongoing debate whether chaotic CA can compute any non-trivial tasks. In contrast, the notion of computational chaos is strongly connected to the computational capacity of the CA. If a CA is computationally chaotic, it cannot compute the dynamics of any other CA. Hence, we might conclude that being computationally chaotic directly implies the inability to compute a non-trivial task.

\section{Conclusion}
We studied the notion of CA emulation as a method of determining the computational capacity of CA. We showed that the CA emulation relation forms a preorder on the ECA class and presented an approximation of the emulation hierarchy produced by the preorder. We did notice that the most chaotic CA seem to be the minimal elements of the hierarchy. This inspired us to introduce a new definition of chaos in the CA environment. Our notion of chaos is novel because it is connected to the computational capacity of a system. In contrast to previous concepts of chaos, our definition does not regard linear and shifting automata as chaotic. This agrees with the results that the dynamics of such CA can be predicted more efficiently than for general CA.

The emulation relation can be defined for CA in any dimension and with an arbitrary neighborhood. Though its computation requires verifying infinitely many conditions, we can compute it just for supercells of small size. Verifying such small supercell values helps us determine whether a CA can emulate any others effectively.

A pair of ECA obtained by changing the role of their ``left and right neighbor" has equivalent dynamics; however, the emulation relation we presented is unable to discover such an equivalence. For this reason, it is meaningful to study other possible definitions of a CA subautomaton. It would be very interesting to examine whether the different variants of the definition would change the computational hierarchy results significantly. 

As a possible application, we can use the method of CA emulation to construct Turing-complete CA without having to give elaborate proofs of this fact. We would simply embed a well-known Turing-complete CA into a newly constructed CA with possibly much richer dynamics. It could be interesting to design a CA capable of emulating many different CA and study the dynamics of such a rich system.

\section{Acknowledgements}
Our work was supported by Grant Schemes at CU, reg. no. CZ.02.2.69/0.0/0.0/19\texttt{\char`_}073/0016935, the Ministry of Education, Youth and Sports within the dedicated program ERC CZ under the project POSTMAN with reference LL1902, by the Czech project AI$\&$Reasoning CZ.02.1.01/0.0/0.0/15\texttt{\char`_}003/0000466 and the European Regional Development Fund, by SVV-2020-260589, and is part of the RICAIP project that has received funding from the European Union's Horizon 2020 research and innovation programme under grant agreement No. 857306.

\footnotesize
\bibliographystyle{apalike}
\bibliography{computational_chaos} 

\end{document}